\def \theotherpaper {mcmahan10}
\begin{document} 

\title{Less Regret via Online Conditioning}

\author{Matthew Streeter\\ 
Google, Inc. \\
\texttt{\small mstreeter@google.com}
\and
H. Brendan McMahan \\
Google, Inc. \\
\texttt{\small mcmahan@google.com}
}

\maketitle

\def \yrcite {\cite}
\def \abovespace {}
\def \belowspace {}

\def \argmin {\mathop{\rm arg\,min}}

\newcommand{\grad}{\triangledown}
\newcommand{\norm}[1]{\ensuremath{\| #1 \|}}
\newcommand{\abs}[1]{\ensuremath{| #1 |}}
\newcommand{\paren}[1]{\ensuremath{\left(#1\right)}}
\newcommand{\Proj}{P}
\newcommand{\R}{\ensuremath{\mathbb{R}}}
\newcommand{\Regret}{\mathcal{R}}
\newcommand{\set}[1]{\ensuremath{\left\{#1\right\}}}
\newcommand{\tuple}[1]{\ensuremath{\left\langle#1\right\rangle}}

\newtheorem{lemma}{Lemma}
\newtheorem{theorem}{Theorem}
\newtheorem{corollary}{Corollary}

\begin{abstract} 
We analyze and evaluate an online gradient descent algorithm with
adaptive per-coordinate adjustment of learning rates.  Our algorithm
can be thought of as an online version of batch gradient descent with
a diagonal preconditioner.  This approach leads to regret bounds that
are stronger than those of standard online gradient descent
for general online convex optimization problems.
Experimentally, we show that our algorithm
is competitive with state-of-the-art algorithms for large
scale machine learning problems.
\end{abstract} 

\section{Introduction} \label{sec:intro}

In the past few years, online algorithms have emerged as
state-of-the-art techniques for solving large-scale machine learning
problems~\cite{bottou08,ma09identifying,zhang04}.  In addition to
their simplicity and generality, online algorithms are natural choices
for problems where new data is constantly arriving and rapid adaptation
is imporant.

Compared to the study of convex optimization in the batch (offline)
setting, the study of online convex optimization is relatively new.
In light of this, it is not surprising that performance-improving
techniques that are well known and widely used in the batch setting do
not yet have online analogues.  In particular, convergence rates in
the batch setting can often be dramatically improved through the use
of \emph {preconditioning}.  Yet, the online convex optimization
literature provides no comparable method for improving regret (the
online analogue of convergence rates).

A simple and effective form of preconditioning is to re-parameterize the loss
function so that its magnitude is the same in all coordinate
directions.  
Without this modification, a batch algorithm such as gradient descent
will tend to take excessively small steps along some axes and to
oscillate back and forth along others, slowing convergence.  In the
online setting, this rescaling cannot be done up front because the
loss functions vary over time and are not known in advance.  As a
result, when existing no-regret algorithms for online convex
optimization are applied to machine learning problems, they tend to
overfit the data with respect to certain features and underfit with
respect to others (we give a concrete example of this behavior in
\S\ref{sec:technicalmotivation}).

We show that this problem can be overcome in a principled way by using
online gradient descent\footnote{When loss functions are drawn IID, as
when online gradient descent is applied to a batch learning problem,
the term stochastic gradient descent is often used.}  with adaptive,
per-coordinate learning rates.  Our algorithm comes with worst-case
regret bounds (see Theorem~\ref{thm:better_regret}) that are never
worse than those of standard online gradient descent, and are much
better when the magnitude of the gradients varies greatly across
coordinates (this structure is common in large-scale problems of
practical interest).  Extending this approach, we give improved bounds
for generalized notions of strong convexity, bounds in terms of the
variance of cost functions, and bounds on adaptive regret (regret
against a drifting comparator).  Experimentally, we show that our
algorithm dramatically outperforms standard online gradient descent on
real-world problems, and is competitive with state-of-the-art
algorithms for online binary classification.

\subsection {Background and notation} \label {sec:background}

In an online convex optimization problem, we are given as input a
closed, convex feasible set $F$.  On each round $t$, we must pick a
point $x_t \in F$.  We then incur loss $f_t(x_t)$, where $f_t$ is a
convex function.  At the end of round $t$, the loss function $f_t$ is
revealed to us.  Our regret at the end of $T$ rounds is the difference
between our total loss and that of the best fixed $x \in F$ in
hindsight, that is
\[
\text{Regret} \equiv \sum_{t=1}^T f_t(x_t) -
\min_{x \in F} \set { \sum_{t=1}^T f_t(x) }.
\]

Sequential prediction using a generalized linear model is an important
special case of online convex optimization.  In this case, each $x_t \in \R^n$
is a vector of weights, where $x_{t, i}$ is the weight assigned to
feature $i$ on round $t$.  On round $t$, the algorithm makes a
prediction $p_t(x_t) = \ell(x_t \cdot \theta_t)$, where $\theta_t \in \R^n$ is
a feature vector and $\ell$ is a fixed link function (e.g.,
$\ell(\alpha) = \frac {1} {1 + \exp(-\alpha)}$ for logistic
regression, $\ell(\alpha) = \alpha$ for linear regression).  The
algorithm then incurs loss that is some function of the prediction
$p_t$ and the label $y_t \in \R$ of the example.  For example, in
logistic regression the loss is $f_t(x) = y_t \log p_t(x) + (1 - y_t)
\log (1 - p_t(x))$, and in least squares linear regression the loss is
$f_t(x) = (y_t - p_t(x))^2$.  In both of these examples, it can be
shown that $f_t$ is a convex function of $x$.

We are particularly interested in online gradient descent and
generalizations thereof.  
Online gradient descent chooses $x_1$ arbitrarily, and thereafter
plays
\begin{equation}\label{eq:ogd}
x_{t+1} = \Proj(x_t - \eta_t g_t)
\end{equation}
where $\eta_1$, $\eta_2$, \ldots, $\eta_T$ is a sequence of learning
rates, $g_t \in \grad f_t(x_t)$ is a subgradient of $f_t(x_t)$, and
$\Proj(x) = \argmin_{y \in F} \set { \norm {x - y}}$ is the projection
operator, where $\norm{\cdot}$ is the L2 norm.  When the learning
rates are chosen appropriately, online gradient descent obtains regret
$O(G D \sqrt T)$, where $D = \max_{x, y \in F}
\set { \norm {x - y} }$ is the diameter of the feasible set and $G =
\max_{t} \set { \norm{g_t} }$ is the maximum norm of the gradients.
Thus, as $T
\rightarrow \infty$, the average loss of the points $x_1, x_2, \ldots,
x_T$ selected by online gradient descent is as good as that of any
fixed point $x \in F$ in the feasible set.  It is perhaps surprising
that this performance guarantee holds for \emph {any} sequence of loss
functions, and in particular that the bounds holds even if the
sequence is chosen adversarially.

\section {Motivations} \label {sec:technicalmotivation}

It is well-known that batch gradient descent performs poorly in the
presence of so-called \emph {ravines}, surfaces that curve more
steeply in some directions than in others \cite{sutton86}.  In this
section we give examples showing that when the slope of the loss
function or the size of the feasible set varies widely across
coordinates, gradient descent incurs high regret in the online
setting.  These observations motivate the use of per-coordinate
learning rates (which can be thought of as an adaptive diagonal preconditioner).

\subsection {A motivating application}

Consider the problem of trying to predict the probability that a user
will click on an ad when it is shown alongside
search results for a particular query, using a generalized linear
model.
For simplicity, imagine there is only one ad, and we wish to predict
its click-through rate on many different queries.  On a large search
engine, a popular query will occur orders of magnitude more often than
a rare query.  For queries that occur rarely, it is necessary to use a
relatively large learning rate in order for the associated feature
weights to move significantly away from zero.  But for popular
queries, the use of such a large learning rate will cause the feature
weights to oscillate wildly, and so the predictions made by the
algorithm will be unstable.  Thus, gradient descent with a global
learning rate cannot simultaneously perform well on common queries and
on rare ones.  Because rare queries are more numerous than common
ones, performing poorly on either category leads to substantial
regret.

\subsection {Tradeoffs in one dimension} \label {sec:tradeoffs}

We first consider gradient descent in one dimension, with a fixed
learning rate $\eta$ (later we generalize to arbitrary
non-increasing sequences of learning rates).

If $\eta$ is too large, the algorithm may oscillate about
the optimal point and thereby incur high regret.  As a simple example,
suppose the feasible set is $[0, D]$, and the loss function on
each round is $f_t(x) = G \abs{x - \epsilon}$, for some small positive
$\epsilon$.  Then $\grad f_t(x) = -G$ if $x < \epsilon$ and $\grad
f_t(x) = G$ if $x > \epsilon$.  It is easy to verify that if the
algorithm plays $x_1 = 0$ initially, it will play $x_t = 0$ on odd
rounds and $x_t = G \eta$ on even rounds, assuming $\epsilon < G \eta
\le D$.  Thus, after $T$ rounds the algorithm incurs total loss
$\frac T 2 G \epsilon + \frac T 2 G (G \eta - \epsilon) = \frac T 2
G^2 \eta$.  Always playing $x = \epsilon$ would incur zero loss, so
the regret is $\frac T 2 G^2 \eta$.

On the other hand, if $\eta$ is too small then $x_t$ may stay close to
zero long after the data indicates that a larger $x$ would
incur smaller loss.  For example, suppose $f_t(x) = -G x$ always.  Then
$x_t = \min \set { D, (t - 1) G \eta}$.  For the first $\frac {D} {2 G
\eta}$ rounds, $x_t \le \frac D 2$ and therefore our per-round regret
relative to the comparator $x = D$ is at least $\frac {G D} {2}$ on
these rounds.  Thus, overall regret is at least $\frac {G D} {2} \min
\set {T, \frac {D} {2 G \eta} } = \frac {D^2} {4 \eta}$,
assuming that $\frac {D} {2 G \eta} \le T$.
Thus, for any choice of $\eta$ there exists a problem where
\[
\max \set { \frac {D^2} {4 \eta}, G^2 \eta \frac T 2 } 
\leq \text{Regret}
\leq \frac {D^2} {2 \eta} + G^2 \eta \frac T 2, 
\]
where the upper bound is adapted from Zinkevich \yrcite{zinkevich03}.
Thus, by setting $\eta = \frac {D} {G \sqrt T}$ (which minimizes the
upper bound) we minimize worst-case regret up to a constant factor.
Note that this choice of $\eta$ satisfies the constraints $\frac {D}
{2 T} \le G \eta \le D$, as was assumed earlier.

The fact that the optimal choice of $\eta$ is proportional to $\frac D
G$ captures a fundamental tradeoff.  When the feasible set is large
and the gradients are small, we must use a larger learning rate in
order to be competitive with points in the far extremes of the
feasible set.  On the other hand, when the feasible set is small and
the gradients are large, we must use a smaller learning rate in order
to avoid the possibility of oscillating between the extremes and
performing poorly relative to points in the center.

Because the relevant values of $D$ and $G$ will in general be
different for different coordinates, a gradient descent algorithm that
uses the same learning rate for all coordinates is doomed to either
underfit on some coordinates or oscillate on others.  To handle this,
we must use different learning rates for different coordinates.
Furthermore, because the magnitude $G$ of the gradients is not known
in advance and can change over time, we must incorporate it into our
choice of learning rate in an online fashion.

\subsection {A bad example for global learning rates} \label {sec:badexample}

We now exhibit a class of online convex optimization problems
where the use of a coordinate-independent learning rate forces regret to
grow at an asymptotically larger rate than with a per-coordinate learning rate.
This result is summarized in the following theorem.

\begin{theorem} \label {thm:bad_class}
There exists a family of online convex optimization problems,
parameterized by their lengths (number of rounds $T$), where
gradient descent with a non-increasing global learning rate incurs
regret at least $\Omega(T^\frac{2}{3})$, whereas gradient descent with
an appropriate per-coordinate learning rate has regret $O(\sqrt{T})$.
\end{theorem}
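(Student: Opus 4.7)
The plan is constructive: for each $T$, I would exhibit a separable two-coordinate problem $f_t(x)=h_t(x_1)+g_t(x_2)$ so that both OGD variants decompose into two independent one-dimensional OGD updates. On coordinate~1 I would use oscillation-forcing losses whose subgradients alternate in sign, exactly in the style of the first bad example of Section~\ref{sec:tradeoffs}; this makes every global learning rate $\eta_t$ contribute an unavoidable $\Theta(G_1^2 \eta_t)$ of regret on that round. On coordinate~2 I would use drift-type losses with a small gradient magnitude $G_2$ but a large displacement requirement, so that avoiding the slow-progress regime of Section~\ref{sec:tradeoffs} demands a \emph{sustained} sequence of nonnegligible learning rates. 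The parameters $(G_i,D_i)$ are scaled as functions of $T$ so that $G_iD_i=\Theta(1)$ for each $i$ (which keeps per-coordinate OGD at $\sqrt T$) while the cross-term $G_1D_2$ scales like $T^{1/6}$; this is the factor by which the global algorithm must lose.

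Given this construction, the upper bound is immediate: because the losses are separable over a box, per-coordinate OGD is just two independent one-dimensional OGD instances, and plugging $\eta_{i,t}=D_i/(G_i\sqrt T)$ into the $O(G_iD_i\sqrt T)$ guarantee of Section~\ref{sec:background} gives $O(\sqrt T)$ total regret. For the lower bound I would extend the fixed-$\eta$ analysis of Section~\ref{sec:tradeoffs} to a non-increasing schedule $\eta_1\ge\cdots\ge\eta_T$: on coordinate~1, grouping consecutive odd/even rounds into pairs and using an Abel-summation identity shows the oscillation regret is at least $\Omega(G_1^2 \sum_t \eta_t)$; on coordinate~2, the displacement argument shows regret at least $\Omega(D_2^2/\bar\eta)$, where $\bar\eta$ is a suitable average of the learning rates, and $\Omega(TG_2D_2)$ in the alternative regime where the iterate never reaches $D_2$ at all. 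A case analysis on whether $\sum_t\eta_t$ lies above or below the threshold $D_2/G_2$, combined with the chosen parameter scalings, shows that the minimum total regret over all non-increasing $\eta_t$ is $\Omega(T^{2/3})$.

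The main obstacle, and the part that needs the most care, is designing the coordinate-2 losses so that no non-increasing $\eta_t$ can cheaply ``burn off'' its displacement requirement with a single early spike of large $\eta_t$. If coordinate~2 simply had the same fixed gradient on every round, one large $\eta_1$ followed by zeros would drive $x_2$ to $D_2$ in one round and accumulate no further regret, which would undermine the lower bound; the remedy is to make the coordinate-2 drift direction change across blocks of rounds so that meaningful progress must be made repeatedly, at which point the non-increasing constraint on $\eta_t$ genuinely bites. Extending the coordinate-1 oscillation bound from fixed $\eta$ to non-increasing $\eta_t$ is a smaller but still nontrivial step, essentially a mechanical pair-by-pair telescoping. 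Once both pieces are in place, tuning the scalings of $G_i$ and $D_i$ so that both lower-bound terms simultaneously reach $T^{2/3}$ completes the proof.
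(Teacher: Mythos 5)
Your upper-bound half is fine, but the lower-bound half has a genuine gap, and you have put your finger on it yourself: with only two coordinates and a single drift coordinate, a non-increasing schedule consisting of one large spike $\eta_1 \approx D_2/G_2$ followed by rates on the order of $D_1/(G_1\sqrt{t})$ drives $x_2$ to the top of its interval immediately (total coordinate-2 regret $O(G_2D_2)=O(1)$) while paying only $O(G_1D_1)=O(1)$ transient plus $O(G_1D_1\sqrt{T})=O(\sqrt{T})$ oscillation regret on coordinate 1, so the global algorithm achieves $O(\sqrt{T})$ and no $\Omega(T^{2/3})$ bound can hold for that instance. The remedy you propose --- flipping the drift direction on coordinate 2 across blocks --- does not repair this, because regret is measured against a \emph{single fixed} comparator: if the up-push and down-push blocks are balanced, every fixed point has total coordinate-2 loss zero, and an algorithm with negligible learning rates (which simply stays put) incurs essentially zero regret on that coordinate; if the blocks are imbalanced, the comparator's advantage is proportional only to the net imbalance, and the one-time spike neutralizes it again. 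The underlying obstruction is that a single drift coordinate can be ``paid off'' once and for all by one move of size $D_2$, so no constant number of drift coordinates can force the repeated catch-up costs your case analysis needs.

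The paper's construction avoids this by letting the dimension grow with $T$: after one oscillation subproblem of length $T_0$ on coordinate 1 (with $G=D=1$), it runs $C$ drift subproblems, \emph{each on its own fresh coordinate} and each of length $T_1$, all occurring after the oscillation phase, with $C=T_1=T_0^{1/3}$. Since only one gradient component is nonzero per round, global-rate gradient descent decomposes exactly into independent one-dimensional copies, and regret is at least $\frac{T_0}{2}\eta + \frac{C}{2}\min\set{T_1, \frac{1}{2\eta}}$ for a constant rate; because the oscillation regret is increasing in $\eta$, the drift regrets are decreasing in $\eta$, and the oscillation phase comes first, the non-increasing constraint lets the same bound carry over to arbitrary schedules without any Abel-summation or threshold case analysis on $\sum_t \eta_t$. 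Each of the $T_0^{1/3}$ drift coordinates forces its own catch-up, which is exactly what your two-coordinate design cannot do, and the per-coordinate algorithm still gets $O(\sqrt{T_0} + C\sqrt{T_1}) = O(\sqrt{T})$. To fix your proof you would essentially have to adopt this many-coordinate, sequential-in-time structure (note this is also why the paper has $D = T^{1/6}$, reconciling the lower bound with the $O(GD\sqrt{T})$ guarantee).
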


The $\Omega(T^{\frac 2 3})$ lower bound stated in Theorem \ref
{thm:bad_class} does not contradict the previously-stated $O(G D \sqrt
T)$ upper bound on the regret of online gradient descent, because in
this family of problems $D = T^{\frac 1 6}$ (and $G = 1$).

\begin {proof} [Proof of Theorem \ref {thm:bad_class}]
To prove this theorem, we interleave instances of the
two classes of one-dimensional subproblem discussed in
\S\ref{sec:tradeoffs}, setting $G = 1$ and setting the feasible set to
$[0, 1]$.  We have one subproblem of the first type, lasting for $T_0$
rounds, followed by $C$ subproblems of the second type, each lasting
$T_1$ rounds.  Each subproblem is assigned its own coordinate.
Formally, the loss function is
\[
f_t(x_t) = \left \{
\begin {array} {l l}
\abs{x_{t,1} - \epsilon} & \mbox { if } t \le T_0 \\
-x_{t, j} & \mbox { if } t > T_0
\mbox { where $j = 1 + \left \lceil \frac {T - T_0} {T_1} \right \rceil$}  \\
\end {array}
\right .
\]

On each round, only one component of the gradient vector is non-zero.
Thus, running gradient descent with global learning rate $\eta$ is
equivalent to running a separate copy of gradient descent on each subproblem,
where each copy uses learning rate $\eta$.  Moreover, overall regret
is simply the sum of the regret on each subproblem.
Thus, by the lower bounds stated \S\ref{sec:tradeoffs}, regret is at
least
\[
\frac {T_0} {2} \eta + \frac C 2 \min \set { T_1, \frac {1} {2 \eta}}
\]
(note that $G = D = 1$).

If we set $C = T_1 = T_0^{\frac 1 3}$, this expression is
$\Omega(T^{\frac 2 3})$.  To see this, first note that if $T_1 \le
\frac {1} {2 \eta}$ then the second term is already $\Omega(T_0^{\frac
  2 3}) = \Omega(T^{\frac 2 3})$
(note that $T = T_0 + T_0^{\frac{2}{3}} \leq 2 T_0$).
Otherwise, a simple minimization over $\eta$ shows that the sum is
$\Omega(T_0^{\frac 2 3})$.  Because regret on the first subproblem is
an increasing function of $\eta$, and regret on all later subproblems
is a decreasing function of $\eta$, the same $\Omega(T^{\frac 2 3})$
lower bound holds for any non-increasing sequence $\eta_1, \eta_2,
\ldots, \eta_T$ of per-round learning rates.  Thus, we have proved the
first part of the theorem.

Now consider the alternative of letting the learning rate for each
coordinate vary independently.  On a one-dimensional subproblem with
feasible set $[0, 1]$ and gradients of magnitude at most 1, gradient
descent using learning rate $\frac {1} {\sqrt s}$ on round $s$
of the subproblem obtains regret $O(\sqrt S)$ on a subproblem of
length $S$ \cite {zinkevich03}.  Thus, if we ran an independent copy
of this algorithm on each coordinate, we would obtain regret $O( \sqrt
{T_0} + C \sqrt {T_1} ) = O(\sqrt {T_0}) = O(\sqrt T)$, which
completes the proof.
\end {proof}

\section{Improved Regret Bounds using Per-Coordinate Learning Rates}
\label{sec:per_coord_rate}

Zinkevich~\yrcite{zinkevich03} proved bounds on the regret of online
gradient descent (which chooses $x_t$ according to Equation~\eqref{eq:ogd}).
Building on his analysis,
we improve these bounds by adjusting the learning rates on a
per-coordinate basis.  Specifically, we obtain these bounds by 
constructing the vector $y_t$ by 
\begin{equation}\label{eq:percoord-gd}
	y_{t, i} = x_{t, i} - g_{t, i} \eta_{t, i}
\end{equation}
where $\eta_t$ is a vector of learning rates, one for each coordinate.
We then play $x_t = P(y_t)$.  We prove bounds for feasible sets
defined by axis-aligned constraints, $F =
\times_{i=1}^n [a_i, b_i]$.  Many machine learning problems can be
solved using feasible sets of this form, as our experiments
demonstrate.\footnote{Our techniques can be extended to arbitrary
feasible sets using a somewhat different algorithm, but the proofs are
signicantly more technical \cite{\theotherpaper}.}

\subsection{A better global learning rate} \label {subsec:better_global_rate}

We first give an improved regret bound for gradient descent with a global
(coordinate-independent) learning rate.  In the next subsection, we
make use of this improved bound in order to prove the desired
bounds on the regret of gradient descent with a per-coordinate
learning rate.

Zinkevich~\yrcite {zinkevich03} showed that if we run gradient descent
with a non-increasing sequence $\eta_1, \eta_2, \ldots, \eta_T$ of
learning rates, regret is bounded by
\begin {equation} \label {eq:regret}
B(\eta_1, \eta_2, \ldots, \eta_T) =
D^2 \frac {1} {2 \eta_T} + \frac 1 2 \sum_{t=1}^T \norm {g_t}^2 \eta_t .
\end {equation}

To guard against the worst case, it is natural to choose our sequence
of learning rates so as to minimize this bound.  Doing so is
problematic, however, because in the online setting the gradients
$g_1, g_2, \ldots, g_T$ are not known in advance.  Perhaps
surprisingly, we can come within a factor of $\sqrt 2$ of the optimal
bound even without having this information up front, as the following
theorem shows.

\newcommand{\Rmin}{R_{\rm{min}}}
\begin {theorem} \label {thm:global_rate}
Setting $\eta_t = \frac {D} { \sqrt { 2 \sum_{s=1}^t \norm {g_s}^2 }}$
yields regret $D \sqrt{ 2 \sum_{t=1}^T \norm {g_t}^2} = \sqrt 2 \cdot
\Rmin$, where $\Rmin = \min_{\eta_1, \eta_2, \ldots, \eta_T:\ \eta_1
  \ge \eta_2 \ge \ldots \ge \eta_T} \set { B(\eta_1, \eta_2, \ldots,
  \eta_T) }$.
\end {theorem}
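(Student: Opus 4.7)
The plan is to split the theorem into two independent claims: (a) the specific choice of $\eta_t$ yields regret at most $D \sqrt{2 \sum_{t=1}^T \norm{g_t}^2}$, and (b) $\Rmin = D \sqrt{\sum_{t=1}^T \norm{g_t}^2}$, so that the ratio is exactly $\sqrt{2}$.

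For (a), I plug $\eta_t = D / \sqrt{2 S_t}$, where $S_t = \sum_{s=1}^t \norm{g_s}^2$, into Zinkevich's bound $B$. The projection term $\frac{D^2}{2\eta_T}$ immediately becomes $\frac{D}{2}\sqrt{2 S_T}$. The gradient sum becomes $\frac{1}{2}\sum_t \norm{g_t}^2 \eta_t = \frac{D}{2\sqrt{2}} \sum_t \frac{S_t - S_{t-1}}{\sqrt{S_t}}$. The key step is the telescoping inequality $\frac{S_t - S_{t-1}}{\sqrt{S_t}} \le 2(\sqrt{S_t} - \sqrt{S_{t-1}})$, which rearranges (after multiplying by $\sqrt{S_t}$) to $2\sqrt{S_t S_{t-1}} \le S_t + S_{t-1}$, i.e., AM-GM. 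Summing telescopes to $2\sqrt{S_T}$, so the gradient term is at most $\frac{D}{2}\sqrt{2 S_T}$, and adding the two contributions gives exactly $D \sqrt{2 S_T}$, as claimed.

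For (b), I need both a matching upper bound on $\Rmin$ achieved by some constant sequence and a lower bound valid for every non-increasing sequence. For the upper bound, take $\eta_t = D / \sqrt{S_T}$ for all $t$; this is trivially non-increasing and plugging into $B$ gives $\frac{D}{2}\sqrt{S_T} + \frac{D}{2}\sqrt{S_T} = D\sqrt{S_T}$, so $\Rmin \le D\sqrt{S_T}$. For the lower bound, use that any non-increasing sequence satisfies $\eta_t \ge \eta_T$, hence
\[
B(\eta_1,\ldots,\eta_T) \ge \frac{D^2}{2 \eta_T} + \frac{\eta_T}{2}\sum_{t=1}^T \norm{g_t}^2 \ge 2 \sqrt{\tfrac{D^2}{2\eta_T} \cdot \tfrac{\eta_T S_T}{2}} = D \sqrt{S_T}
\]
by AM-GM. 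Combining, $\Rmin = D\sqrt{S_T}$, and so the regret of our algorithm equals $\sqrt{2}\,\Rmin$.

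Neither step is a serious obstacle; the only subtlety is recognizing the right telescoping inequality in step (a), and being careful that the lower bound in step (b) uses the non-increasing constraint (without it one could drive $\eta_t\to 0$ for $t<T$ and beat $D\sqrt{S_T}$). The rest is direct substitution and AM-GM.
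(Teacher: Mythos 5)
Your proposal is correct and follows essentially the same route as the paper: substitute the given $\eta_t$ into Zinkevich's bound $B$, control the resulting sum via the inequality $\sum_{t} \frac{\norm{g_t}^2}{\sqrt{\sum_{s\le t}\norm{g_s}^2}} \le 2\sqrt{\sum_t \norm{g_t}^2}$, and then show $\Rmin = D\sqrt{\sum_t \norm{g_t}^2}$ is attained by a constant rate. The only (cosmetic) differences are that you prove the key summation inequality by telescoping plus AM--GM where the paper uses an induction (Lemma~\ref{lem:sum}), and you establish the lower bound on $\Rmin$ directly via $\eta_t \ge \eta_T$ and AM--GM rather than the paper's ``without loss of generality the optimal rate is constant'' argument; both steps are equivalent in substance, and your lower-bound argument is, if anything, slightly more airtight.
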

\begin {proof}
Plugging the formula for $\eta_t$ into \eqref {eq:regret}, and
then using Lemma \ref {lem:sum} (below), we see that regret
is bounded by
\begin {align*}
  \frac{1}{2} D \paren{ \sqrt {2 \sum_{t=1}^T \norm {g_t}^2 }
    + \sum_{t=1}^T \frac {\norm{g_t}^2} {\sqrt { 2 \sum_{s=1}^t \norm
        { g_s }^2 } } } \\ \le D \sqrt{ 2 \sum_{t=1}^T \norm
    {g_t}^2} \mbox { .}
\end {align*}
We now compute $\Rmin$.
First, note that if $\eta_t > \eta_{t+1}$ for some $t$ then we could
reduce the second term in $B(\{\eta_t\})$ by making $\eta_t$ smaller.
Because the sequence is constrained to be non-increasing, it follows
that the bound is minimized using a constant learning rate $\eta$.
A simple minimization then shows that it is optimal to set
$\eta = \frac {D} { \sqrt { \sum_{t=1}^T \norm {g_t}^2 }}$.
which gives regret $D \sqrt { \sum_{t=1}^T \norm {g_t}^2 } = \Rmin$.
\end {proof}

A related result appears in \cite {bartlett08}, giving improved bounds
in the case of strongly convex functions but worse constants than ours
in the case of linear functions.

\begin {lemma} \label {lem:sum}
For any non-negative real numbers $x_1, x_2, \ldots, x_n$,
\[
\sum_{i=1}^n \frac { x_i } { \sqrt { \sum_{j=1}^i x_j } } \le 2 \sqrt
    { \sum_{i=1}^n x_i } \mbox { .}
\]
\end {lemma}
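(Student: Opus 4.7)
The plan is to prove this by a telescoping argument based on a standard square-root difference inequality. Let $S_i = \sum_{j=1}^i x_j$, with $S_0 = 0$. I would first handle the degenerate terms: any index $i$ with $S_i = 0$ must have $x_i = 0$ too, so I adopt the convention (or argue by continuity / by simply dropping these terms from the sum) that such summands contribute $0$ on the left. Thus it suffices to bound the sum over indices $i$ with $S_i > 0$.

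The key step is the identity
\[
\sqrt{S_i} - \sqrt{S_{i-1}} \;=\; \frac{S_i - S_{i-1}}{\sqrt{S_i} + \sqrt{S_{i-1}}} \;=\; \frac{x_i}{\sqrt{S_i} + \sqrt{S_{i-1}}},
\]
valid whenever $S_i > 0$. Since $S_{i-1} \le S_i$, the denominator on the right is at most $2\sqrt{S_i}$, so
\[
\sqrt{S_i} - \sqrt{S_{i-1}} \;\ge\; \frac{x_i}{2\sqrt{S_i}},
\]
which rearranges to $\tfrac{x_i}{\sqrt{S_i}} \le 2(\sqrt{S_i} - \sqrt{S_{i-1}})$.

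Summing this inequality from $i=1$ to $n$ telescopes to $2(\sqrt{S_n} - \sqrt{S_0}) = 2\sqrt{S_n} = 2\sqrt{\sum_{i=1}^n x_i}$, giving the claim. The only obstacle is the bookkeeping around potentially vanishing partial sums; once one either restricts to the support of $x$ or fixes the $0/0$ convention, the argument is a two-line telescoping calculation.
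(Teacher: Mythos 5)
Your proof is correct, but it takes a genuinely different route from the paper's. You use the identity $\sqrt{S_i} - \sqrt{S_{i-1}} = x_i/(\sqrt{S_i} + \sqrt{S_{i-1}})$ together with $\sqrt{S_{i-1}} \le \sqrt{S_i}$ to get the pointwise bound $x_i/\sqrt{S_i} \le 2(\sqrt{S_i} - \sqrt{S_{i-1}})$, and then telescope; the paper instead proceeds by induction on $n$: assuming the bound for $n-1$ terms, it must show $2\sqrt{Z - x} + x/\sqrt{Z} \le 2\sqrt{Z}$ (with $Z = S_n$, $x = x_n$), which it does by a one-variable calculus argument (the derivative in $x$ is negative, so the expression is maximized at $x = 0$). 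The two arguments are essentially dual ways of packaging the same inequality: your telescoping step is exactly the inductive step made explicit, but proved algebraically rather than by differentiation. Your version is slightly more elementary (no calculus needed) and makes the mechanism transparent; it also lets you see immediately that the constant $2$ is what the square-root difference identity naturally produces. Your care about indices with $S_i = 0$ (adopting the $0/0 = 0$ convention or dropping those terms) is a point the paper glosses over entirely, since its displayed expression also divides by $\sqrt{S_i}$; so your bookkeeping is, if anything, a small improvement in rigor rather than a gap.
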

\begin {proof}
The lemma is clearly true for $n = 1$.  Fix some $n$, and assume the
lemma holds for $n - 1$.  Thus,
\begin {align*}
	\sum_{i=1}^n \frac { x_i } { \sqrt { \sum_{j=1}^i x_j } }
& \le  2 \sqrt { \sum_{i=1}^{n-1} x_i }
  + \frac { x_n } { \sqrt { \sum_{i=1}^n x_i } } \\
& = 2 \sqrt {Z - x} + \frac {x} { \sqrt { Z } }
\end {align*}
where we define $Z = \sum_{i=1}^{n} x_i$ and $x = x_n$.  The
derivative of the right hand side with respect to $x$ is $\frac {-1}
{\sqrt {Z -x}} + \frac {1} {\sqrt Z}$, which is negative for $x > 0$.
Thus, subject to the constraint $x \ge 0$,
the right hand side is maximized at $x = 0$, and is
therefore at most $2 \sqrt Z$.
\end {proof}

\subsection {A per-coordinate learning rate} \label {subsec:per_coord_rate}

\begin{algorithm}[tb]
   \caption{Per-coordinate gradient descent} \label {alg:percoordinate}
\begin{algorithmic}
   \STATE {\bfseries Input:} feasible set $F = \times_{i=1}^n [a_i, b_i]$
   \STATE Initialize $x_1 = 0$ and  $D_i = b_i - a_i$.
   \FOR{$t=1$ {\bfseries to} $T$}
   \STATE Play the point $x_t$.
   \STATE Receive loss function $f_t$, set $g_t = \grad f_t(x_t)$.
   \STATE Let $y_{t+1}$ be a vector whose $i^{th}$ component is
   $y_{t+1, i} = x_{t, i} - \eta_{t, i} g_{t, i}$, where
   $\eta_{t, i} = \frac {D_i} {\sqrt {\sum_{s=1}^t g^2_{s, i}}}$.
   \STATE Set $x_{t+1} = \Proj(y_{t+1})$.
   \ENDFOR
\end{algorithmic}
\end{algorithm}

We can improve the above bound by running, for each coordinate, a
separate copy of gradient descent that uses the learning rate given in
the previous section (see Algorithm \ref {alg:percoordinate}).
Specifically, we use the update of Equation~\eqref{eq:percoord-gd}
with $\eta_{t, i} = \frac {D_i} {\sqrt{\sum_{s=1}^t g_{s, i}^2 }}$,
where $D_i = b_i - a_i$ is the diameter of the feasible set along
coordinate $i$.

The following theorem makes three important points about the
performance of Algorithm~\ref{alg:percoordinate}: (\emph{i}), its
regret is bounded by a sum of per-coordinate bounds, each of the same
form as \eqref{eq:regret}; (\emph{ii}) the algorithm's choice
of $\eta_{t, i}$ gives a regret bound that is only a factor of $\sqrt 2$
worse than if the bound had been optimized knowing $g_1, g_2, \dots,
g_T$ in advance; and,
(\emph{iii}), the regret bound of
Algorithm~\ref{alg:percoordinate} is never worse than
the bound for global learning rates stated in
Theorem~\ref{thm:global_rate}.  Futhermore, as illustrated in
Theorem~\ref{thm:bad_class}, the per-coordinate bound can be better by
an arbitrarily large factor if the magnitude of the gradients varies
widely across coordinates.

\begin {theorem} \label {thm:better_regret}
Let $F = \times_{i=1}^n [a_i, b_i]$.  Then,
Algorithm~\ref {alg:percoordinate} has regret bounded by
$\sum_{i=1}^n B_i(\set{\eta_{t, i}})$, where
\[
B_i(\set{\eta_{t, i}}) \equiv 
D_i^2 \frac {1} {2 \eta_{T, i}} + \frac 1 2 \sum_{t=1}^T g_{t, i}^2 \eta_{t, i}
\mbox { .}
\]
Setting $\eta_{t, i} = \frac {D_i} {\sqrt{\sum_{s=1}^t g_{s, i}^2 }}$,
the bound becomes
\begin{equation}\label{eq:percoordbound}
\sum_{i=1}^n D_i \sqrt { 2 \sum_{t=1}^T g_{t, i}^2 }
= \sqrt 2 \sum_{i=1}^n \Rmin^i
\end{equation}
where $\Rmin^i = \min_{\{\eta_{t, i}\}: \eta_{1, i} \ge \eta_{2, i}
\ge \ldots \ge \eta_{T, i}} \set { B_i(\{\eta_{t, i}\}) }$.  This is a
stronger guarantee than Theorem~\ref{thm:global_rate}, in
that
\begin {equation} \label {eq:better_regret}
\sum_{i=1}^n D_i \sqrt { 2 \sum_{t=1}^T g_{t, i}^2 } \le
D \sqrt{ 2 \sum_{t=1}^T \norm {g_t}^2}
\end {equation}
where $D = \sqrt{\sum_{i=1}^n D_i^2}$ is the diameter of the set $F$.
\end {theorem}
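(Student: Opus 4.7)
The plan is to reduce the analysis to $n$ independent one-dimensional problems and then apply Theorem~\ref{thm:global_rate} coordinatewise. Two structural facts make this reduction work: (a) convexity linearizes the regret, so only coordinate-wise products $g_{t,i}(x_{t,i}-x^*_i)$ appear, and (b) the projection onto an axis-aligned product set $F=\times_{i=1}^n[a_i,b_i]$ acts independently on each coordinate (it simply clips each $y_i$ to $[a_i,b_i]$). Claim (iii) then follows by a direct application of Cauchy--Schwarz.

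First, I would write $x^\star=\argmin_{x\in F}\sum_t f_t(x)$ and use convexity of $f_t$ to get $f_t(x_t)-f_t(x^\star)\le g_t\cdot(x_t-x^\star)$. Summing over $t$ and swapping the order of summation yields
\[
\text{Regret}\ \le\ \sum_{i=1}^n\sum_{t=1}^T g_{t,i}(x_{t,i}-x^\star_i).
\]
Because $\Proj$ is separable, the sequence $\{x_{t,i}\}_t$ produced by Algorithm~\ref{alg:percoordinate} is exactly the trajectory of one-dimensional gradient descent on the interval $[a_i,b_i]$ with learning rates $\eta_{t,i}$ against the linear losses $\tilde f_{t,i}(x)=g_{t,i}\,x$, and $x^\star_i\in[a_i,b_i]$. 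Zinkevich's bound \eqref{eq:regret}, whose proof makes no reference to dimension, therefore bounds the inner sum by $B_i(\{\eta_{t,i}\})$. Summing over $i$ gives claim (\emph{i}).

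Second, for claim (\emph{ii}) I would plug $\eta_{t,i}=D_i/\sqrt{\sum_{s\le t}g_{s,i}^2}$ into $B_i$ and invoke Lemma~\ref{lem:sum} coordinatewise. This is precisely the calculation done in the proof of Theorem~\ref{thm:global_rate}, but applied with $D\mapsto D_i$ and $\|g_s\|^2\mapsto g_{s,i}^2$, and yields $B_i(\{\eta_{t,i}\})\le D_i\sqrt{2\sum_t g_{t,i}^2}$. The same theorem identifies the right-hand side as $\sqrt 2\,\Rmin^i$, so summing gives \eqref{eq:percoordbound}.

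Finally, for \eqref{eq:better_regret} I would apply Cauchy--Schwarz to the vectors with components $D_i$ and $\sqrt{\sum_t g_{t,i}^2}$:
\[
\sum_{i=1}^n D_i\sqrt{\sum_{t=1}^T g_{t,i}^2}\ \le\ \sqrt{\sum_{i=1}^n D_i^2}\,\sqrt{\sum_{i=1}^n\sum_{t=1}^T g_{t,i}^2}\ =\ D\sqrt{\sum_{t=1}^T\|g_t\|^2},
\]
using $\sum_i g_{t,i}^2=\|g_t\|^2$ and the definition of $D$; multiplying by $\sqrt 2$ yields the stated inequality. None of the steps is technically hard; the main conceptual point is noticing that the axis-aligned structure decouples Algorithm~\ref{alg:percoordinate} into $n$ independent one-dimensional instances, so that the improved per-coordinate rate from Theorem~\ref{thm:global_rate} can be invoked in parallel. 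The only minor care required is how to define $\eta_{t,i}$ when $\sum_{s\le t}g_{s,i}^2=0$, which is harmless: the corresponding update is trivial and can be taken to leave $x_{t,i}$ unchanged.
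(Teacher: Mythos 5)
Your proposal is correct and follows essentially the same route as the paper's proof: linearize the losses (the paper cites Zinkevich's reduction to the linear case, you do the first-order convexity inequality explicitly, which is the same thing), observe that the axis-aligned projection decouples Algorithm~\ref{alg:percoordinate} into $n$ independent one-dimensional gradient-descent runs so that the bound \eqref{eq:regret} applies coordinatewise, then invoke the calculation of Theorem~\ref{thm:global_rate} (via Lemma~\ref{lem:sum}) for each coordinate and finish with Cauchy--Schwarz. Your added remarks---that $x^\star_i \in [a_i,b_i]$ makes the per-coordinate comparator valid and that the degenerate case $\sum_{s\le t} g_{s,i}^2 = 0$ is harmless---are sensible details the paper leaves implicit, not a different argument.
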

\begin {proof}
Zinkevich\yrcite {zinkevich03} showed that, so long as our algorithm
only makes use of $\grad f_t(x_t)$, we may assume without loss of
generality that $f_t$ is linear, and therefore $f_t(x) = g_t \cdot x$
for all $x \in F$.
If $F$ is a hypercube, then the projection operator $P(x)$ simply projects
each coordinate $x_i$ indepdently onto the interval $[a_i, b_i]$.
Thus, in this special case, we can think of each coordinate $i$
as solving a separate online convex optimization problem where the
loss function on round $t$ is $g_{t, i} \cdot x$.  Thus,
Equation~\eqref{eq:regret} implies that for each $i$,
\[
\sum_{t=1}^T g_{t, i} x_{t, i} - \min_{y \in [a_i, b_i]} \set {
  \sum_{t=1}^T g_{t, i} y} \le B_i(\set{\eta_{t, i}}) \mbox{ .}
\]
Summing this bound over all $i$, we get the regret bound
\begin {equation}
	\sum_{t=1}^T g_t \cdot x_t - \min_{x \in F} \set {
          \sum_{t=1}^T g_t \cdot x} \le \sum_{i=1}^n B_i(\set{\eta_{t, i}}) .
\end {equation}
Applying Theorem~\ref{thm:global_rate} to each one-dimensional
problem, we get $B_i(\set{\eta_{t, i}}) =
D_i \sqrt { 2 \sum_{t=1}^T g_{t, i}^2 } = \sqrt 2 \cdot \Rmin^i$ $\forall i$.

To prove inequality \eqref {eq:better_regret}, let $\vec D \in \R^n$
be a vector whose $i^{th}$ component is $D_i$, and let $\vec g \in
\R^n$ be a vector whose $i^{th}$ component is $\sqrt { 2 \sum_{t=1}^T
  g_{t, i}^2 }$, so the left-hand side of \eqref{eq:better_regret} can be
  written as $\vec D \cdot \vec g $. Then, 
using the Cauchy-Schwarz inequality,
\[
 \vec D \cdot \vec g 
 \le \norm{\vec D} \cdot \norm{\vec g} 
= \sqrt {\sum_{i=1}^n D_i^2} \sqrt { 2 \sum_{i=1}^n \sum_{t=1}^T g_{t, i}^2 }
\mbox { .}
\]
The right hand side simplifies to
$D \sqrt { 2 \sum_{t=1}^T \norm{g_t}^2 }$.
\end {proof}

\section{Additional Improved Regret Bounds} \label{sec:morebounds}

The approach of bounding overall regret in terms of the sum
of regret on a set of one-dimensional problems can be used to obtain
additional regret bounds that improve over those of previous work, in
the special case where the feasible set is a hypercube.  The key
observation is captured in the following lemma.

\begin {lemma} \label {lem:decomposition}
Consider an online optimization problem with feasible set 
$F = \times_{i=1}^n [a_i, b_i]$ 
and loss functions $f_1, f_2,
\ldots, f_T$.  For each $t$, let $\ell_t(x) = \sum_{i=1}^n \ell_{t,
i}(x_i)$ be a lower bound on $f_t$ (i.e., $f_t(x) \ge \ell_t(x)$ for
all $x \in F$).  Further suppose that $f_t(x_t) = \ell_t(x_t)$ for all
$t$, where $\set{x_t}$ is the sequence of points played by an online
algorithm.  Consider the composite online algorithm formed by running
a 1-dimensional algorithm independently for each coordinate $i$ on
feasible set $[a_i, b_i] \subseteq \R^n$, with loss function
$\ell_{t,i}$ on round $t$.  Let
\[
R = \sum_{t=1}^T f_t(x_t) - \min_{x \in F} \set { \sum_{t=1}^T f_t(x) }
\]
be the total regret of the composite algorithm, and let
\[
R_i = \sum_{t=1}^T \ell_{t, i}(x_{t, i}) -
  \min_{x_i \in [a_i, b_i]} \set { \sum_{t=1}^T \ell_{t, i}(x_i)  }
\]
be the regret incurred by the algorithm responsible for choosing the
$i^{th}$ coordinate.  Then $R \le \sum_{i=1}^n R_i$.
\end {lemma}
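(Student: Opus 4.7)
The plan is to chain together three elementary inequalities, using (a) the tightness condition $f_t(x_t)=\ell_t(x_t)$ to handle the algorithm's losses, (b) the pointwise lower bound $f_t\ge \ell_t$ to handle the comparator term, and (c) the product structure of $F$ together with separability of $\ell_t$ to split the comparator into per-coordinate pieces.

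Concretely, I would start from the definition of $R$ and rewrite the first summation. By hypothesis $f_t(x_t)=\ell_t(x_t)=\sum_{i=1}^n \ell_{t,i}(x_{t,i})$, so
\[
\sum_{t=1}^T f_t(x_t) \;=\; \sum_{i=1}^n \sum_{t=1}^T \ell_{t,i}(x_{t,i}).
\]
For the comparator term, the lower bound $f_t(x)\ge \ell_t(x)$ for all $x\in F$ gives $\sum_t f_t(x)\ge \sum_t \ell_t(x)$, and hence
\[
-\min_{x\in F}\sum_{t=1}^T f_t(x) \;\le\; -\min_{x\in F}\sum_{t=1}^T \ell_t(x).
\]

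The key step is the decomposition of this last minimum. Since $F=\times_{i=1}^n [a_i,b_i]$ is a product and $\sum_t \ell_t(x)=\sum_{i=1}^n \sum_{t=1}^T \ell_{t,i}(x_i)$ is a sum of functions each depending on only one coordinate, the minimization separates:
\[
\min_{x\in F}\sum_{t=1}^T \ell_t(x) \;=\; \sum_{i=1}^n \min_{x_i\in[a_i,b_i]}\sum_{t=1}^T \ell_{t,i}(x_i).
\]
Combining the three displays yields
\[
R \;\le\; \sum_{i=1}^n\Bigl(\sum_{t=1}^T \ell_{t,i}(x_{t,i}) - \min_{x_i\in[a_i,b_i]}\sum_{t=1}^T \ell_{t,i}(x_i)\Bigr) \;=\; \sum_{i=1}^n R_i,
\]
which is the claim.

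There is no real obstacle here; the only substantive observation is that separability of $\ell_t$ plus the product structure of $F$ allows the joint minimum to be taken coordinatewise. The tightness hypothesis $f_t(x_t)=\ell_t(x_t)$ is what lets the algorithm's losses (measured under $f_t$) be accounted for coordinatewise using the same $\ell_{t,i}$ that the per-coordinate subalgorithms see, so that the right-hand side is exactly the sum of their regrets rather than something larger.
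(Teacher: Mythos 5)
Your proposal is correct and is essentially the same argument the paper gives: bound $R$ by the $\ell_t$-regret using the tightness condition $f_t(x_t)=\ell_t(x_t)$ together with $f_t\ge\ell_t$ at the comparator, then split the minimum over the product set coordinatewise by separability of $\ell_t$. No meaningful differences from the paper's proof.
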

\begin {proof}
Because $f_t(x) \ge \ell_t(x)$ $\forall x$, and $f_t(x_t) = \ell_t(x_t)$,
\begin {align*}
R 
& \le \sum_{t=1}^T \ell_t(x_t) - \min_{x \in F} \set { \sum_{t=1}^T \ell_t(x) } \\
& = \sum_{t=1}^T \ell_t(x_t) - \sum_{i=1}^n \min_{x_i \in [a_i, b_i]}
\set { \sum_{t=1}^T \ell_{t, i}(x_i) }
& = \sum_{i=1}^n R_i \mbox { .} 
\end {align*} 
\end {proof}

Importantly, for arbitrary convex functions, we can always construct
such independent lower bounds by choosing
$ \ell_t(x) = f_t(x) + \grad f(x_t)(x - x_t)$,
as long as we add a ``bias'' coordinate where $a_i = b_i = 1$.  A
similar observation was originally used by
Zinkevich~\yrcite{zinkevich03} to show that any algorithm for online
linear optimization can be used for online convex optimization.  We
used this fact in the proof of Theorem~\ref{thm:better_regret},
where we only analyzed the linear case.

This simple lemma has powerful ramifications.  We now discuss several
improved guarantees that can be obtained by applying it to known
online algorithms.  For simplicity, when stating these bounds we
assume that the feasible set is $F = [0, 1]^n$ and that the gradients
of the loss functions are componentwise upper bounded by 1 (that is,
$|(\grad f_t(x_t))_i| \le 1$ for all $t$ and $i$).

\subsection {More general notions of strong convexity}

A function $f$ is $H$-strongly convex if, for all $x, y
\in F$, it holds that $f(y) \ge f(x) + \grad f(x) \cdot (y - x) + \frac H 2
\norm{y - x}^2$.  Strongly convex functions arise, for example, when
solving learning problems subject to L2 regularization.

Bartlett et al.~\yrcite{bartlett08} give an online convex optimization
algorithm whose regret is
\[
O\paren{n \cdot \min\set{ \sqrt T, \frac {1} {H} \log T }}
\]
where $H$ is the largest constant such that each $f_t$ is $H$-strongly
convex.  We can generalize the concept of strong convexity as follows.
We say that $f$ is strongly convex with respect to the vector $\vec H$
if, for all $x, y \in F$, $f(y) \ge f(x) + \grad f(x) \cdot (y - x) +
\sum_{i=1}^n \frac {\vec H_i} {2} (y_i - x_i)^2$.  Suppose we run the
algorithm of Bartlett et al.\ independently for each coordinate,
feeding back $\ell_{t, i}(y_i) = \frac 1 n f_t(x_t) + \grad
f_t(x_t)_i \cdot (y_i - x_{t, i}) + \frac {\vec H_i} {2} (y_i - x_{t, i})^2$
to the algorithm responsible for choosing coordinate $i$ (we can
always choose $\vec{H}_i \geq H$).  Applying
Lemma~\ref{lem:decomposition}, we obtain a regret bound
\[
O\paren{\sum_{i=1}^n \min \set { \sqrt T, \frac {1} {\vec H_i} \log T }}
\mbox { .}
\]
This bound is never worse than
the previous one, and is better if the degree of strong convexity differs
substantially across different coordinates (e.g., if using different 
L2 regularization parameters for different classes of features).

\subsection {Tighter bounds in terms of variance}

Hazan and Kale \yrcite{hazan08} give a bound on gradient descent's
regret in terms of the variance of the sequence of gradients.
Specifically, their algorithm has regret $O(\sqrt {n V})$, where $V =
\sum_{t=1}^T \norm{g_t - \mu}^2$ and $\mu = \frac 1 T \sum_{t=1}^T g_t$,
where $g_t = \grad f_t(x_t)$.

By running a separate copy of their algorithm on each coordinate, we
can instead obtain a bound of $O(\sum_{i=1}^n \sqrt {V_i})$, where
$V_i = \sum_{t=1}^T (g_{t, i} - \mu_i)^2$.

To compare the bounds, let $\vec v \in \R^n$ be a
vector whose $i^{th}$ component is $\sqrt {V_i}$, and let $\vec 1 \in
\R^n$ be a vector whose components are all 1.  Note that $\norm{\vec
  v} = \sqrt {\sum_{i=1}^n V_i} = \sqrt {V}$.  Using the
Cauchy-Schwarz inequality,
\[
\sum_{i=1}^n \sqrt {V_i} = \vec 1 \cdot \vec v
\le \norm{\vec 1} \cdot \norm{\vec v} = \sqrt {n V} \mbox { .}
\]
Thus, the bound obtained by running separate copies of the algorithm for
each coordinate is never worse than the original bound, and is
substantially better when the variance $V_i$ varies greatly across
coordinates.

\subsection {Adaptive regret}

One weakness of standard regret bounds like those stated so far is
that they bound performance only in terms of the static optimal
solution over all $T$ rounds.  In a non-stationary environment, it is
desirable to obtain stronger guarantees.  For example, suppose the
feasible set is $[0, 1]$, $f_t(x) = x$ for the first $\frac T 2$
rounds and $f_t(x) = -x$ thereafter.  Then an algorithm that plays $x_t
= 0$ for all $t$ has 0 regret, yet its loss on the final $\frac T 2$
rounds is $\frac T 2$ worse than if it had played the point $x = 1$
for those rounds.  Indeed, standard regret-minimizing algorithms fail
to adapt in simple examples such as this.

Hazan and Seshadhri~\yrcite{hazan09} define \emph {adaptive regret} as
the maximum, over all intervals $[T_0, T_1]$, of the regret
$\sum_{t=T_0}^{T_1} f_t(x_t) - \min_{x \in F} \set {
  \sum_{t=T_0}^{T_1} f_t(x) } $ incurred over that interval.  For
$H$-strongly convex functions, their algorithm achieves adaptive
regret $O\paren{\frac {1} {H} \log^2 T}$.

By running an independent copy of their algorithm on each coordinate,
we can obtain the following guarantee.  Consider an
arbitrary sequence $Z = \tuple{z_1, z_2, \ldots, z_T}$ of points in
$F$, and let $R_Z = \sum_{t=1}^T f_t(x_t) - f_t(z_t)$ be the regret
relative to that sequence.  Holding
$H$ constant for simplicity, the adaptive regret bound just stated
implies that the algorithm of Hazan and Seshadhri~\yrcite{hazan09}
obtains $R_Z = O((N+1) \log^2 T)$, where $N$ is the number of values
of $t$ for which $z_t \neq z_{t+1}$ (this follows by summing adaptive
regret over the $N + 1$ intervals where $z_t$ is constant).  Using
separate copies for each coordinate, we instead obtain
\[
R_Z = O\paren{\sum_{i=1}^n (N_i + 1)  \log^2 T}
\]
where $N_i$ is the number of values of $t$ where $z_{t, i} \neq z_{t+1, i}$.
This bound is never worse than the previous one, and is better
when some coordinates of the vectors in $Z$ change more frequently than others.

This provides an improved performance
guarantee when the environment is stationary with respect to some
coordinates and non-stationary with respect to others.  This could
happen, for example, if the effect of certain features 
(e.g., features for advertisers in certain business sectors) 
changes over time, but the effect of other features remains constant.

\section{Experimental Evaluation} \label{sec:exp}

In this section, we evaluate gradient descent with per-coordinate
learning rates experimentally on several machine learning problems.

\subsection {Online binary classification}

\begin{table}[t]
\caption{
Hinge loss and accuracy in the online setting on binary classification
problems.}
\label{tab:classification}
\begin{center}
\begin{small}
\begin{sc}
\begin{tabular}{lcccccr}
\hline
\abovespace\belowspace
          Data &         Global &      Per-Coord &             CW &             PA \\
\hline
\noalign{\smallskip}
\multicolumn{5}{l}{\textbf{Hinge loss}} \\
         books &          0.606 & \textbf{0.545} &          0.871 &          0.672 \\
           dvd &          0.576 & \textbf{0.529} &          0.851 &          0.637 \\
   electronics &          0.509 & \textbf{0.452} &          0.802 &          0.555 \\
       kitchen &          0.470 & \textbf{0.419} &          0.787 &          0.520 \\
          news &          0.171 & \textbf{0.140} &          0.512 &          0.245 \\
          rcv1 &          0.076 & \textbf{0.070} &          0.542 &          0.094 \\
\hline
\noalign{\smallskip}
\multicolumn{5}{l}{\textbf{Fraction of mistakes}} \\
         books &          0.259 & \textbf{0.211} &          0.215 &          0.254 \\
           dvd &          0.238 &          0.208 & \textbf{0.203} &          0.240 \\
   electronics &          0.209 & \textbf{0.175} &          0.177 &          0.194 \\
       kitchen &          0.180 & \textbf{0.151} &          0.153 &          0.175 \\
          news &          0.064 & \textbf{0.050} &          0.054 &          0.060 \\
          rcv1 &          0.027 & \textbf{0.025} &          0.039 &          0.034 \\
\hline
\end{tabular}
\end{sc}
\end{small}
\end{center}
\vskip -0.2in
\end{table}

We first compare the performance of online gradient descent with that of
two recent algorithms for text classification: the Passive-Aggressive
(PA) algorithm~\cite{crammer06passive}, and confidence-weighted (CW)
linear classification~\cite{drezde08}.  The latter algorithm has been
demonstrated to have state-of-the-art performance on large real-world
problems~\cite{ma09identifying}.

We used four sentiment classification data sets (Books, Dvd,
Electronics, and Kitchen), available from~\cite{sentiment}, each with
1000 positive examples and 1000 negative examples,\footnote{We used
the features provided in processed\_acl.tar.gz, and scaled each vector
of counts to unit length.  } as well as the scaled versions of the
rcv1.binary (677,399 examples) and news20.binary (19,996 examples)
data sets from LIBSVM~\yrcite{libsvmdata}. For each data set, we shuffled the
examples and then ran each algorithm for one pass over the data,
computing the loss on each event before training on it.

For the online gradient descent algorithms, we set $F = [-R, R]^n$ for
$R=100$.  We found that the learning rate suggested by
Theorem~\ref{thm:better_regret} was too aggressive in practice when the feasible
set is large (note that it moves a feature's weight to the maximum
value the first time it sees a non-zero gradient for that feature).
In order to improve performance, we did some parameter tuning.
For Algorithm~\ref{alg:percoordinate} (Per-Coord),
we scaled the learning rate formula by a factor of $0.6/R$,
and for the global learning rate (Global) we scaled it by $0.2 / R$. We
estimate the diameter $D$ in the global learning rate formula online,
based on the number of attributes seen so far.  For CW, we found that the
parameters $\phi = 1.0$ and $a=1.0$ worked well in practice.

Table~\ref{tab:classification} presents average hinge loss and the
fraction of classification mistakes for each algorithm.  The Global and
Per-Coord algorithms are designed to minimize hinge loss,
and at this objective the Per-Coord algorithm consistently wins.
CW and PA are designed to maximize classification accuracy,
and on this objective Per-Coord and CW are the best algorithms.
The fact that the classification accuracy of Per-Coord is comparable to
that of a state-of-the-art binary classification algorithm is impressive
given the former algorithm's generality
(i.e., its applicability to arbitrary online
convex optimization problems such as online shortest paths).

\subsection {Large-scale logistic regression}

\begin{table}[t]
\caption{Additive regret incurred in the online setting, for logistic regression on various ads data sets.}
\label{tab:regression}
\vskip 0.15in
\begin{center}
\begin{small}
\begin{sc}
\begin{tabular}{lcccr}
\hline
\abovespace\belowspace
Data set & Global & Per-Coord \\
\hline
\abovespace
auto insurance    & 0.215 & {\bf 0.028 } \\

business cards    & 0.261 & {\bf 0.034 } \\

credit cards      & 0.225 & {\bf 0.029 } \\

credit report     & 0.148 & {\bf 0.012 } \\

forex             & 0.158 & {\bf 0.025 } \\

health insurance  & 0.232 & {\bf 0.032 } \\

life insurance    & 0.231 & {\bf 0.032 } \\

shoe              & 0.263 & {\bf 0.050 } \\

telefonica        & 0.171 & {\bf 0.026 } \\
\hline
\end{tabular}
\end{sc}
\end{small}
\end{center}
\vskip -0.2in
\end{table}
We collected data from a large search engine\footnote{No user-specific
data was used in these experiments.} consisting of random samples of
queries that contained a particular phrase, for example ``auto
insurance''.  Each data set has a few million examples.  We
transformed this data into an online logistic regression problem with
a feature vector $\theta_t$ for each ad impression, using features
based on the text of the ad and the query.  The target label $\ell_t$
is 1 if the ad was clicked, and -1 otherwise.  The loss function $f_t$
is the sum of the logistic loss, $\log \paren {1 + \exp(- \ell_t x_t
\theta_t) }$, and an L2 regularization term.

We compare gradient descent using the global learning rate from
\S\ref{subsec:better_global_rate} with gradient descent using the
per-coordinate rate given in \S\ref{subsec:per_coord_rate}.  We
scaled the formulas given in those sections by 0.1; this improved
performance for both algorithms but did not change the relative
comparison.  The feasible set was $[-1, 1]^n$.

Table~\ref{tab:regression} shows the regret incurred by the two
algorithms on various data sets.  Gradient descent with a
per-coordinate learning rate consistently obtains an order of
magnitude lower regret than with a global learning rate.
To
calculate regret, we computed the static optimal loss
$\min_{x \in F} \set {\sum_{t=1}^T f_t(x)}$
by running our per-coordinate
algorithm through the data many times until convergence.

\section{Related Work} \label{sec:related}

The use of different learning rates for different coordinates has been
investigated extensively in the neural network community.  There the
focus has been on empirical performance in the batch setting, and a
large number of algorithms have been developed; see for example
\cite{jacobs88}.  These algorithms are not designed to perform well in
an adversarial online setting, and for many of them it is
straightforward to construct examples where the algorithm incurs high
regret.

More recently, Hsu et al.\ \yrcite{hsu09} gave an algorithm
for choosing per-coordinate learning rates for gradient descent,
derive asymptotic rates of convergence in the batch setting, and
present a number of positive experimental results.

Confidence-weighted linear classification \cite{drezde08} 
and AROW \cite{crammer09} are similar
to our algorithm in that they make different-sized adjustments for
different coordinates, and in that common features are
updated less aggressively than rare ones.  Unlike our
algorithm, these algorithms apply only to classification problems and
not to general online convex optimization, and the guarantees are in
the form of mistake bounds rather than regret bounds.

In concurrent work \cite {\theotherpaper}, we generalize the results of this
paper to handle arbitrary feasible sets and a matrix (rather than a vector) 
of learning rate parameters.  Similar theoretical results were
obtained independently by Duchi et al.\ \yrcite{duchi10}.

\bibliographystyle{plain}
\bibliography{icml10}

\end{document}